\documentclass[twoside,11pt]{article}
\usepackage[preprint]{jmlr2e}
\usepackage{times}
\usepackage{amsmath,booktabs,microtype}
\newtheorem{assumption}[theorem]{Assumption}

\jmlrheading{0}{2026}{1-1}{8/26}{}{}{Vishal Rajput}
\ShortHeadings{When Can We Trust the Matching Principle?}{Rajput}
\firstpageno{1}

\begin{document}
\title{When Can We Trust the Matching Principle?\\
Robust Deployment Geometry Under Finite-Sample and Model Uncertainty}

\author{\name Vishal Rajput \email vishal.stark42@gmail.com \\
       \addr Independent researcher \\
       Leuven, Belgium}
\editor{}
\maketitle

\begin{abstract}%
Match only geometry you can identify; otherwise spread the penalty.
We quantify that decision by the trust ratio
$\tau=\varepsilon/\gamma$ (estimation uncertainty over spectral separation).
Under the linear--quadratic Matching response, oracle-relative drift between
estimated and oracle projector matching scales as $\tau^2$ for probes in the
chosen top-$r$ deployment subspace---$O(\tau^2)$ in the Davis--Kahan
separation region $\tau<1/2$, with practical usefulness depending on
constants.
Confidence-Calibrated Matching (CCM) turns $\tau$ into a policy---directional
when $\tau$ is small, progressively isotropic when not---with thresholds from
calibration, not from the theorem (match sits in the separation region; soft
is mostly heuristic).
Experiments show both regimes, including UCI HAR embeddings where always-match
is worse than abstain on every cell.
\end{abstract}

\begin{keywords}
  matching principle, trust ratio, deployment geometry, Davis--Kahan,
  confidence-calibrated matching
\end{keywords}

\section{Introduction}
\label{sec:intro}

Suppose Matching would be optimal if the deployment direction were known.
We estimate it, regularise aggressively---and do worse than isotropic.
The problem is not whether directional matching is sharp; it is whether the
direction is identifiable.

\begin{quote}
\emph{Match only geometry you can identify; otherwise spread the penalty.}
\end{quote}

That is the whole decision.
Finite samples scramble subspaces when the eigengap is small; a wrong
directional penalty can hurt more than spreading capacity evenly.
Picture the flagship failure: on UCI HAR embeddings, always-matching the
estimated direction is worse than abstaining on $6/6$ cells---not because
Matching is false, but because the estimate was not trustworthy.

\paragraph{The trust ratio.}
Write $\varepsilon$ for a high-probability operator-norm bound on
$\|\widehat\Sigma-\Sigma\|_{\mathrm{op}}$ and
$\gamma=\lambda_r-\lambda_{r+1}$ for the eigengap of the top deployment
subspace.
The scalar that organises the decision is
\[
\tau \;:=\; \frac{\varepsilon}{\gamma}
\;=\;
\frac{\text{estimation uncertainty}}{\text{spectral separation}}.
\]
Small $\tau$: the estimated direction is identifiable; matching it is close to
oracle Matching.
Large $\tau$: do not trust the direction---even-spread remains the default.
Davis--Kahan enters later as a proof tool that converts $\tau$ into a
subspace-error bound; it is not the idea.

\paragraph{What the math says vs.\ what we do.}
Under the LQ Matching response, for probes in the chosen top-$r$ deployment
subspace the oracle-relative absolute drift gap scales as
$|\widehat D-D^\star|\lesssim C\cdot\tau^2$
(Theorem~\ref{thm:drift-gap})---$O(\tau^2)$ in the Davis--Kahan separation
region $\tau<1/2$, with practical usefulness depending on constants.
Confidence-Calibrated Matching (CCM) is a \emph{policy} that uses $\tau$:
directional match, soft blend, or even-spread abstention.
Its thresholds are frozen by a calibration protocol
(\S\ref{sec:ccm})---not a corollary of the theorem---but the match threshold
sits inside that separation region; most of the soft band does not.
Synthetic grids and identifiable positives (Fashion rotations) show the
complementary regime where trust is warranted.

\begin{quote}
\small
\textbf{Scope.}
Oracle-relative regret of estimated projector Matching under the LQ response
(\S\ref{sec:recap})---not a new covariance estimator, not a new Davis--Kahan
theorem, and not global optimality among regularisers.
CCM thresholds are policy, not theorem.
Deep / real blocks are experiments under a specified protocol, not a nonlinear
Matching extension.
\end{quote}

Optional companions (\citet{pmh}, \citet{matching}) motivate why geometry
matters and what coverage means; this paper restates the LQ objects it needs
and is readable alone.

\section{LQ objects (self-contained preliminaries)}
\label{sec:recap}

A cold reader needs only the objects below.
Notation follows~\citet{matching}; every formula used later is stated here.

\paragraph{Displacements and second moments.}
Let $\Delta$ be a label-preserving input displacement with
$\Sigma=\mathbb{E}[\Delta\Delta^\top]\succeq0$.
Large eigenvalues of $\Sigma$ are directions where deployment moves a lot.

\paragraph{Linear--quadratic response.}
For a training penalty $\Sigma'\succeq0$, probe $v$, and weight $\lambda>0$,
\[
w_\lambda(\Sigma';v)=(I+2\lambda\Sigma')^{-1}v,
\qquad
D_\Sigma(w)=w^\top\Sigma w.
\]
As $\lambda\to\infty$, drift $D_\Sigma(w_\lambda)$ vanishes for every
$v\in\mathrm{range}(\Sigma)$ if and only if
$\mathrm{range}(\Sigma')\supseteq\mathrm{range}(\Sigma)$ (cover-or-floor).
Miss any deployment direction and no larger $\lambda$ removes the residual
floor.

\paragraph{Even-spread vs.\ match.}
Even-spread takes $\Sigma'\propto I$: no axis estimate; in this LQ response it
strictly improves on task-only training when axes are unknown.
Matched training takes $\Sigma'$ aligned with $\Sigma$ (or its top subspace):
sharper when axes are known.
Wrong-axis penalties leave a permanent floor.
Operational rule used below: prefer even-spread unless the estimated geometry
is trustworthy enough to match
(\S\ref{sec:ccm}).

\paragraph{What this paper adds.}
The geometry to match is treated as given in the cover question above.
We study $\Sigma'$ as a function of a finite-sample estimate $\widehat\Sigma$ and
ask when that choice stays close to \emph{oracle} matched drift
$D_\Sigma\bigl(w_\lambda(\Pi_W;v)\bigr)$.
The answer is organised by $\tau=\varepsilon/\gamma$
(\S\ref{sec:tau}--\ref{sec:drift}), not by a new cover theorem.

\section{Related work}
\label{sec:related}

We situate by distinction: same tools, different question.

\paragraph{Coverage design.}
\citet{pmh} and \citet{matching} supply the cover question and even-spread
default.
Distinction: we ask when an \emph{estimate} of that geometry is trustworthy
enough to match.

\paragraph{Classical matrix tools.}
Operator-norm covariance bounds~\citep{vershynin}, spiked models~\citep{johnstone2001},
Davis--Kahan / Wedin~\citep{daviskahan,yu2015}, and Ledoit--Wolf
shrinkage~\citep{ledoitwolf} are ingredients or baselines.
Distinction: we package $\varepsilon/\gamma$ as $\tau$ and prove that Matching
\emph{drift} gaps scale as $\tau^2$ on the chosen top-$r$ subspace under the
LQ response---not a new matrix inequality.

\paragraph{Domain shift and DRO.}
CORAL~\citep{sun2016}, DANN~\citep{ganin2016}, DRO /
group-robust training~\citep{bental2009,rahimian2019,sagawa2020}, and
IRM / DG benchmarks~\citep{arjovsky2019,gulrajani2021} address shift with
different objectives and usually without a Matching coverage certificate.
Distinction: CCM is a finite-sample gate on Matching, not a competitor to
those programmes.

\paragraph{Closest concurrent.}
\citet{landers2026} regularise along an estimated drift subspace and report
tolerance to moderate subspace misspecification.
Distinction: their object is pathwise volatility under a frozen predictor;
ours is oracle-relative Matching drift regret as a function of
$\tau=\varepsilon/\gamma$, with CCM as a calibrated match/soft/abstain map.

\paragraph{Selective prediction.}
Selective classification~\citep{geifman2017,elayaniv2010} abstains on uncertain
\emph{labels}.
Distinction: CCM abstains from \emph{directional geometry matching}, falling
back to even-spread rather than refusing a prediction.

\section{Deployment geometry estimation}
\label{sec:estimation}

From displacements $\Delta_1,\ldots,\Delta_n$ form
$\widehat\Sigma=n^{-1}\sum_i\Delta_i\Delta_i^\top$.
Under standard sub-Gaussian assumptions,
$\|\widehat\Sigma-\Sigma\|_{\mathrm{op}}\le\varepsilon(n,\delta)$ with probability
at least $1-\delta$~\citep{vershynin}.
We treat $\varepsilon$ as a cited ingredient and use the proxy
\begin{equation}
\label{eq:eps-proxy}
\varepsilon
\;\approx\;
c\,\|\Sigma\|_{\mathrm{op}}
\Bigl(
\sqrt{\tfrac{d}{n}}
+\sqrt{\tfrac{\log(1/\delta)}{n}}
+\tfrac{d+\log(1/\delta)}{n}
\Bigr),
\end{equation}
with $\delta=0.05$.
Exp.~1 calibrates $c=0.5$ on the synthetic spike grid (full coverage;
$\approx1.5\times$ slack).
On high-dimensional real $\Delta$, ambient-$d$ $\varepsilon$ over-abstains;
Exp.~6 reports a practical numerator alongside ambient-$\tau$ as a limitation
(\S\ref{sec:exp-real}).

\section{The trust ratio}
\label{sec:tau}

Estimation error alone does not decide whether matching is trustworthy.
A large eigengap can absorb moderate covariance error; a tiny gap makes small
$\varepsilon$ fatal.
The decision object is the ratio of those two scales.

\begin{definition}[Trust ratio]
\label{def:tau}
Let $\varepsilon(n,\delta)$ be a high-probability bound on
$\|\widehat\Sigma-\Sigma\|_{\mathrm{op}}$ and
$\gamma:=\lambda_r-\lambda_{r+1}>0$ the eigengap cutting the top-$r$
deployment subspace.
The \emph{trust ratio} is
\begin{equation}
\label{eq:tau}
\tau \;:=\; \frac{\varepsilon(n,\delta)}{\gamma}.
\end{equation}
\end{definition}

Numerator: how uncertain the estimate is.
Denominator: how identifiable the subspace is.
Operational reading---before any perturbation lemma:
$\tau\ll 1$ means the direction is identifiable enough to match;
$\tau\gg 1$ means spread the penalty;
$\tau=O(1)$ is the soft band in between.

\paragraph{Population vs.\ plug-in.}
Theorem~\ref{thm:drift-gap} and Exp.~2 use the population gap $\gamma$.
CCM must run from data alone, so it substitutes the plug-in
$\widehat\gamma=\widehat\lambda_r-\widehat\lambda_{r+1}$
(\S\ref{sec:ccm}).
When $\widehat\gamma$ is unstable the resulting $\tau$ is large and CCM
abstains---the intended behaviour.

\section{What the theorem says}
\label{sec:drift}

Proofs: Appendix~\ref{sec:drift-gap}.

The intellectual object is $\tau=\varepsilon/\gamma$.
Large estimation error relative to the eigengap means: do not trust the
subspace.
Davis--Kahan converts that ratio into a projector gap; the LQ Matching
response then converts the projector gap into an oracle-relative drift gap.
Neither conversion is the idea---$\tau$ is.

\subsection{Perturbation lemma}
\label{sec:dk-lemma}

\begin{lemma}[Subspace error via Davis--Kahan]
\label{lem:dk}
Assume a spiked spectrum with gap $\gamma=\lambda_r-\lambda_{r+1}>0$ and
$\|\widehat\Sigma-\Sigma\|_{\mathrm{op}}\le\varepsilon(n,\delta)$ with
probability at least $1-\delta$.
On the event $2\varepsilon<\gamma$ (equivalently $\tau<1/2$), if
$\widehat W$ spans the top-$r$ eigenspace of $\widehat\Sigma$ and $W$ that of
$\Sigma$, then with probability at least $1-\delta$,
\[
\|\Pi_{\widehat W}-\Pi_W\|_{\mathrm{op}}
\;\le\;
C_{\mathrm{DK}}\,\tau,
\]
for an absolute constant $C_{\mathrm{DK}}$ as in~\citet{yu2015}
(cf.~\citet{daviskahan}).
When $\tau\ge 1/2$ the bound is uninformative: the direction is not
identifiable at this precision.
\end{lemma}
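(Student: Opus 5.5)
The plan is to reduce the lemma to a deterministic Davis--Kahan bound and then intersect with the high-probability event. First fix the event
\[
\mathcal{E}:=\bigl\{\|\widehat\Sigma-\Sigma\|_{\mathrm{op}}\le\varepsilon(n,\delta)\bigr\},
\qquad \Pr(\mathcal{E})\ge 1-\delta .
\]
All subsequent reasoning is deterministic on $\mathcal{E}$. Write $E:=\widehat\Sigma-\Sigma$, so $\|E\|_{\mathrm{op}}\le\varepsilon$ on $\mathcal{E}$. The statement's hypothesis $2\varepsilon<\gamma$ is a condition on the deterministic quantities $\varepsilon$ and $\gamma$. Consequently the probability $1-\delta$ comes entirely from $\mathcal{E}$.

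The core step uses the Davis--Kahan $\sin\Theta$ theorem in a form requiring only the population gap, namely the variant of \citet{yu2015}. That variant gives
\[
\|\sin\Theta(\widehat W,W)\|_{\mathrm{op}}\le \frac{2\|E\|_{\mathrm{op}}}{\gamma}.
\]
Here both $\widehat W$ and $W$ are top-$r$ eigenspaces of symmetric matrices, and $\gamma=\lambda_r-\lambda_{r+1}>0$. Alternatively, I would use the classical \citet{daviskahan} form with a separation condition between the spectra of $\widehat\Sigma$ on $\widehat W$ and of $\Sigma$ on $W^\perp$. Weyl's inequality gives $\widehat\lambda_r\ge\lambda_r-\varepsilon$. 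The separation is therefore at least $\gamma-\varepsilon>\gamma/2$ whenever $2\varepsilon<\gamma$. This yields $\|\sin\Theta\|_{\mathrm{op}}\le\varepsilon/(\gamma-\varepsilon)\le 2\tau$.

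The classical route is exactly where the condition $\tau<1/2$ is used. Weyl's inequality then also guarantees $\widehat\lambda_r>\widehat\lambda_{r+1}$. Hence the top-$r$ eigenspace $\widehat W$ is well defined and unique, which matters because the statement refers to "the" top-$r$ eigenspace.

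The last step converts the angle bound into a projector bound. For two $r$-dimensional subspaces, the standard identity is $\|\Pi_{\widehat W}-\Pi_W\|_{\mathrm{op}}=\|\sin\Theta(\widehat W,W)\|_{\mathrm{op}}$. The nonzero singular values of the projector difference are the sines of the principal angles, each appearing in a $\pm$ pair. This gives $\|\Pi_{\widehat W}-\Pi_W\|_{\mathrm{op}}\le C_{\mathrm{DK}}\tau$ with $C_{\mathrm{DK}}=2$.

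For the closing remark, note that the projector difference always has operator norm at most $1$. When $\tau\ge1/2$ the right-hand side $C_{\mathrm{DK}}\tau\ge1$, so the bound carries no information.

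The main obstacle is bookkeeping rather than mathematics: stating a Davis--Kahan variant whose gap hypothesis matches the population gap $\gamma$ rather than a mixed gap, and checking that the event $2\varepsilon<\gamma$ makes the mixed gap positive. I would first try the Yu--Wang--Samworth version, which needs no condition on $\|E\|$ at all. I would then invoke the $\tau<1/2$ restriction only to ensure $\widehat W$ is unambiguous and to stay in the regime where the classical constant is sharp.
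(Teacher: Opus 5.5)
Your proposal is correct and takes the same route as the paper: condition on the concentration event, use $2\varepsilon<\gamma$ to keep the top-$r$ eigenspace separated, and invoke the Davis--Kahan / Yu--Wang--Samworth $\sin\Theta$ bound to get $\|\Pi_{\widehat W}-\Pi_W\|_{\mathrm{op}}\le C_{\mathrm{DK}}\tau$. You also spell out what the paper leaves to citation: the Weyl step giving the gap $\gamma-\varepsilon>\gamma/2$ (hence $C_{\mathrm{DK}}=2$ and a unique $\widehat W$), and the identity $\|\Pi_{\widehat W}-\Pi_W\|_{\mathrm{op}}=\|\sin\Theta(\widehat W,W)\|_{\mathrm{op}}$.
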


\subsection{Matching consequence}
\label{sec:matching-consequence}

\begin{theorem}[Drift gap scales as $\tau^2$ on the deployment subspace]
\label{thm:drift-gap}
Assume the LQ response of~\S\ref{sec:recap}, the spiked spectrum of
Lemma~\ref{lem:dk}, and $\|\widehat\Sigma-\Sigma\|_{\mathrm{op}}\le\varepsilon$
with probability at least $1-\delta$.
Write $\alpha:=2\lambda/(1+2\lambda)\in(0,1)$.
On the event $\tau<1/2$, for every unit probe
$v\in\mathrm{range}(\Pi_W)$ (the chosen top-$r$ deployment subspace) let
$w_\star=w_\lambda(\Pi_W;v)$ and $w_n=w_\lambda(\Pi_{\widehat W};v)$.
Then, with probability at least $1-\delta$,
\begin{equation}
\label{eq:drift-gap}
\bigl|D_\Sigma(w_n)-D_\Sigma(w_\star)\bigr|
\;\le\;
\bigl(2\alpha+2\alpha^2\bigr)\,C_{\mathrm{DK}}^2\,\|\Sigma\|_{\mathrm{op}}\,\tau^2
\;\lesssim\;
C\,\|\Sigma\|_{\mathrm{op}}\,\tau^2.
\end{equation}
In shorthand: $|\widehat D-D^\star|\lesssim C\cdot\tau^2$
(absolute gap; Appendix~\ref{sec:drift-gap}).
\end{theorem}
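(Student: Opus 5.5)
The plan is to compute both responses in closed form, write the drift gap as one term linear in the projector perturbation plus one quadratic term, and then show that the linear term is secretly quadratic. The reason is that $\Sigma$ commutes with $\Pi_W$ and the probe lies in $\mathrm{range}(\Pi_W)$. Lemma~\ref{lem:dk} then turns $\|\Pi_{\widehat W}-\Pi_W\|_{\mathrm{op}}^2$ into $C_{\mathrm{DK}}^2\tau^2$, on the same event $\tau<1/2$ and with the same probability $1-\delta$. No further probabilistic argument is needed.

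\emph{Step 1 (closed form).} For any orthogonal projector $\Pi$ we have $(I+2\lambda\Pi)^{-1}=I-\alpha\Pi$ with $\alpha=2\lambda/(1+2\lambda)$. This holds because $I+2\lambda\Pi$ acts as $1+2\lambda$ on $\mathrm{range}(\Pi)$ and as $1$ on its complement. Since $v\in\mathrm{range}(\Pi_W)$, this gives $w_\star=(1-\alpha)v$. Writing $E:=\Pi_W-\Pi_{\widehat W}$, it also gives $w_n=v-\alpha\Pi_{\widehat W}v=w_\star+\alpha Ev$.

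\emph{Step 2 (expansion).} Expanding the quadratic form gives
\[
D_\Sigma(w_n)-D_\Sigma(w_\star)=2\alpha(1-\alpha)\,v^\top\Sigma Ev+\alpha^2\,(Ev)^\top\Sigma(Ev).
\]
The second term is at most $\alpha^2\|\Sigma\|_{\mathrm{op}}\|E\|_{\mathrm{op}}^2$, using $\|v\|=1$. The cross term is the crux, since a naive bound on it is only $O(\|E\|_{\mathrm{op}})$, i.e.\ $O(\tau)$.

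\emph{Step 3 (main obstacle: the cross term).} The plan uses three facts.
\begin{itemize}
\item $W$ is an eigenspace of $\Sigma$, so $\Sigma\Pi_W=\Pi_W\Sigma$.
\item $v=\Pi_Wv$, so $v^\top\Sigma Ev=v^\top\Sigma\,\Pi_W E\Pi_W\,v$.
\item $E v=(I-\Pi_{\widehat W})v$, so the relevant block is $\Pi_W(I-\Pi_{\widehat W})\Pi_W$.
\end{itemize}
A direct computation with idempotents shows
\[
\Pi_W(\Pi_W-\Pi_{\widehat W})^2\Pi_W=\Pi_W-\Pi_W\Pi_{\widehat W}\Pi_W=\Pi_W(I-\Pi_{\widehat W})\Pi_W .
\]
Hence the cross term equals $v^\top\Sigma\,\Pi_WE^2\Pi_W\,v$, and its absolute value is at most $\|\Sigma\|_{\mathrm{op}}\|E\|_{\mathrm{op}}^2$. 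This is the sin-squared phenomenon: for in-subspace probes, a first-order rotation of the subspace only changes the retained length to second order. I expect this identity to be the step needing the most care, mainly to make sure commutation is used only with the population projector $\Pi_W$.

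\emph{Step 4 (assemble).} Combining the two bounds gives
\[
|D_\Sigma(w_n)-D_\Sigma(w_\star)|\le\bigl(2\alpha(1-\alpha)+\alpha^2\bigr)\|\Sigma\|_{\mathrm{op}}\|E\|_{\mathrm{op}}^2 .
\]
Inserting $\|E\|_{\mathrm{op}}\le C_{\mathrm{DK}}\tau$ from Lemma~\ref{lem:dk} finishes the argument. Since $2\alpha(1-\alpha)+\alpha^2\le 2\alpha+2\alpha^2$, the stated constant in~\eqref{eq:drift-gap} follows, with some room to spare.
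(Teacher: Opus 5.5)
Your proof is correct, and it reaches the stated bound by a cleaner route than the paper's.

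\textbf{What the paper does.} Write $P=\Pi_W$ and $Q=\Pi_{\widehat W}$. The paper expands $D_\Sigma(w_n)$ from scratch and subtracts, getting $2\alpha\,v^\top(I-Q)\Sigma v-\alpha^2\,v^\top(\Sigma-Q\Sigma Q)v$. Both pieces look first order, so each must be reduced separately to the compressed gap $P(I-Q)P$. The paper then bounds that gap by $s^2$ using principal angles (Lemma~\ref{lem:sin2}, stated for $s<1$).

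\textbf{What you do differently.} You center at $w_\star$, writing $w_n=w_\star+\alpha Ev$. The only first-order-looking term is then the cross term, with coefficient $2\alpha(1-\alpha)$. The remainder $\alpha^2(Ev)^\top\Sigma(Ev)$ is visibly quadratic. You also get the compressed-gap bound from the exact identity $PEP=P(I-Q)P=PE^2P$, which needs neither principal angles nor $s<1$.

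\textbf{What this buys.} You obtain the constant $\alpha(2-\alpha)\le 1$ in place of $2\alpha+2\alpha^2$. The $\tau^2$ bound is therefore never worse than $\|\Sigma\|_{\mathrm{op}}C_{\mathrm{DK}}^2\tau^2$.

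\textbf{A step in the paper your route avoids.} In the paper's treatment of the $\alpha^2$ term, $|(Qv)^\top\Sigma(I-Q)v|$ is bounded by $\|\Sigma\|_{\mathrm{op}}\|(Q-P)v\|_2\|(I-Q)v\|_2$.
\begin{itemize}
\item Since $(Q-P)v=-(I-Q)v$, that right-hand side is only $\|\Sigma\|_{\mathrm{op}}\|(I-Q)v\|_2^2$.
\item The bound drops the term $v^\top\Sigma(I-Q)v$ that arises from writing $Qv=v+(Q-P)v$. This dropped term is controlled by $\|P(I-Q)Pv\|_2$ rather than by $v^\top P(I-Q)Pv=\|(I-Q)v\|_2^2$.
\item It can be much larger. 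With $\Sigma=\mathrm{diag}(1,\tfrac12,0)$, $r=2$, one principal angle $\theta$ whose direction is nearly orthogonal to $v$, the left side exceeds the paper's right side by an order of magnitude.
\end{itemize}
Restoring the term costs another $s^2\|\Sigma\|_{\mathrm{op}}$ by Lemma~\ref{lem:sin2}. Along the paper's route this gives $3s^2$ for that piece and $2\alpha+3\alpha^2$ overall. So it is your sharper constant that actually certifies the $2\alpha+2\alpha^2$ in~\eqref{eq:drift-gap}.
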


\begin{proof}[Proof sketch]
Lemma~\ref{lem:dk} gives
$s:=\|\Pi_{\widehat W}-\Pi_W\|_{\mathrm{op}}\le C_{\mathrm{DK}}\tau$.
For orthogonal projectors the compressed gap satisfies
$\|\Pi_W(I-\Pi_{\widehat W})\Pi_W\|_{\mathrm{op}}\le s^2$.
Expanding the LQ projector response on $v\in\mathrm{range}(\Pi_W)$ then yields an
$O(s^2)=O(\tau^2)$ drift gap (Appendix~\ref{sec:drift-gap}).
Ambient probes outside $\mathrm{range}(\Pi_W)$ retain only the weaker $O(\tau)$
estimate (Remark~\ref{rem:ambient-probe}).
\end{proof}

\paragraph{What this does---and does not---say.}
As $\tau\to 0$, estimated projector matching recovers oracle matched drift at
rate $\tau^2$ for probes in $\mathrm{range}(\Pi_W)$.
Large $\tau$ does not justify trusting $\widehat W$; even-spread
($\Sigma'\propto I$) remains the no-axis default~\citep{matching}.
The bound is oracle-relative regret under correct second-moment
specification---not absolute optimality among regularisers, and not a
numerical recipe for CCM thresholds.
Those thresholds are a calibrated policy (\S\ref{sec:ccm}).
The theorem gives $O(\tau^2)$ scaling in the Davis--Kahan separation region
$\tau<1/2$; practical usefulness depends on constants
(Remark~\ref{rem:bound-live}).
CCM's match threshold sits inside that separation region; most of the soft
band does not.

We state the theorem for \emph{projectors} to isolate subspace error through
$\tau$.
Full $\widehat\Sigma$ matching adds bulk eigenvalues that can shrink task
directions (Appendix~\ref{rem:full-vs-proj}); experiments therefore treat full
$\widehat\Sigma$ as the dangerous naive baseline and the projector as CCM's
match arm.

\section{How CCM uses it}
\label{sec:ccm}

Theorem~\ref{thm:drift-gap} motivates a decision rule: trust $\Pi_{\widehat W}$
when $\tau$ is small; distrust it when $\tau$ is large.
Confidence-Calibrated Matching (CCM) turns that motivation into one penalty
with three arms---match, soft blend, even-spread abstention.

\paragraph{Policy, not corollary.}
CCM is an operational map from $\tau$ to a penalty.
Its thresholds are fixed by the calibration protocol below and then frozen.
They are \emph{not} implied by Theorem~\ref{thm:drift-gap}.
The soft band is a calibrated heuristic; most of it sits outside the proved
Davis--Kahan separation region $\tau<1/2$, where
Theorem~\ref{thm:drift-gap} gives $O(\tau^2)$ scaling (constants decide
numeric usefulness).

\subsection{Inputs and three penalties}
\label{sec:ccm-inputs}

From $\widehat\Sigma$ and $n$, compute $\varepsilon$, the plug-in gap
$\widehat\gamma=\widehat\lambda_r-\widehat\lambda_{r+1}$, and
$\tau=\varepsilon/\widehat\gamma$ ($\tau=\infty$ if $\widehat\gamma\le0$).
This is the operational counterpart of Definition~\ref{def:tau}; the theorem
uses population $\gamma$, the method uses $\widehat\gamma$.
Write $\mu:=\mathrm{tr}(\widehat\Sigma)/d$ and let $\Pi_{\widehat W}$ be the
top-$r$ eigenprojector.

\begin{itemize}
\item \textbf{Full $\widehat\Sigma$:} uses every eigenvalue of the estimate.
  Bulk directions can shrink task signal even when the top subspace is accurate;
  experiments treat this as the dangerous naive baseline.
\item \textbf{Top-$r$ projector $\Pi_{\widehat W}$:} the object controlled by
  Theorem~\ref{thm:drift-gap}; CCM's match arm.
\item \textbf{Trace-scaled even-spread $\mu I$:} no axis estimate; CCM's
  abstention arm.
\end{itemize}

\subsection{Decision rule}
\label{sec:ccm-rule}

Fix $0<\tau_{\mathrm{low}}<\tau_{\mathrm{high}}$ and
\[
\alpha(\tau)
=\mathrm{clip}\!\left(
\frac{\tau-\tau_{\mathrm{low}}}{\tau_{\mathrm{high}}-\tau_{\mathrm{low}}},
\,0,\,1\right).
\]
Then
\begin{equation}
\label{eq:ccm}
\Sigma'_{\mathrm{CCM}}
=
\begin{cases}
\Pi_{\widehat W},
& \tau\le\tau_{\mathrm{low}}\quad\text{(match)},\\[3pt]
\mu I,
& \tau\ge\tau_{\mathrm{high}}\quad\text{(even-spread)},\\[3pt]
\bigl(1-\alpha(\tau)\bigr)\Pi_{\widehat W}+\alpha(\tau)\,\mu I,
& \text{otherwise}\quad\text{(soft)}.
\end{cases}
\end{equation}

\subsection{Threshold-selection protocol}
\label{par:thresholds}

Thresholds are not free per-evaluation knobs and are not implied by
Theorem~\ref{thm:drift-gap}.
They are chosen once by a predetermined procedure, then frozen.

\textit{Synthetic (Exp.~3--5).}
On a development gap$\times n$ grid with the same spike family as Exp.~3 but
seed $11$ (disjoint from the reported Exp.~3 seed), we declare a pair
$(\tau_{\mathrm{low}},\tau_{\mathrm{high}})$ \emph{feasible} if
(i)~match, soft, and even-spread all appear, and
(ii)~on mid-$n$ cells ($n\ge400$) CCM beats both always-full-$\widehat\Sigma$
and always-iso on mean dual cost in at least half the cells.
A coarse search over
$\{0.2,0.35,0.5\}\times\{1.5,2.5,4.0\}$
yields six feasible pairs
(code: \texttt{experiments/calibrate\_thresholds.py};
artifact: \texttt{results/calibration/threshold\_protocol.json}).
We freeze
$(\tau_{\mathrm{low}},\tau_{\mathrm{high}})=(0.35,2.5)$
---feasible, with balanced regime mass---for all reported synthetic tables.
The dual-cost minimiser on that coarse grid is $(0.5,4.0)$; it does not change
the qualitative Exp.~3--5 conclusions (CCM wins mid/high $n$; caution premium at
$n=80$).
Theorem~\ref{thm:drift-gap} gives $O(\tau^2)$ scaling in the separation
region $\tau<1/2$ (Remark~\ref{rem:bound-live}); the frozen match threshold
$\tau_{\mathrm{low}}=0.35$ sits inside that region as a calibrated choice,
not because $C_{\mathrm{DK}}\lesssim 1$ makes every $\tau<1/2$ numerically
live.
Most of the soft interval $(\tau_{\mathrm{low}},\tau_{\mathrm{high}}]$ does
not lie in $\tau<1/2$: the blend there is a calibrated heuristic, not a
certified interpolation of the theorem.

\textit{Real data (Exp.~6).}
Ambient-$d$ $\varepsilon$ forces permanent even-spread, so Exp.~6 switches to
the practical numerator of \S\ref{sec:exp-real} and therefore needs a separate
freeze.
On a Fashion development split (half the loaded images; not the reported test
split), we require an $n$-ladder that progresses out of even-spread toward
soft/match.
Among
$\{(0.5,4),(1,8),(1.5,10),(2,12)\}$,
the pair $(1,8)$ is feasible and is frozen for Office-31 and UCI HAR.
Ambient-$\tau$ is reported alongside to show what the unadjusted proxy would
have done.

Ledoit--Wolf~\citep{ledoitwolf} remains a covariance baseline, not the claim.
Oracle $\Pi_W$ is a synthetic upper reference only.

\section{Experiments}
\label{sec:experiments}

The flagship claim is failure avoidance: when $\tau$ says do not trust the
direction, blindly matching hurts.
Supporting blocks check that $\tau$ tracks subspace error, that CCM wins when
the direction is identifiable, and that a practical $\varepsilon$ opens those
regimes on real features.
Theorem~\ref{thm:drift-gap} is linear--quadratic and oracle-relative; the
blocks below are experiments under a specified protocol---not a proof that CCM
is optimal for every architecture.
Code: \texttt{experiments/}; frozen numbers: \texttt{results/index.json} and
\texttt{results/exp0*/summary.json}.

\subsection{Flagship: always-match hurts when $\tau$ forbids trust}
\label{sec:exp-flagship}

\paragraph{UCI HAR embeddings (Exp.~6, abstention value).}
TCN pooled hidden states ($d=96$).
Deltas $\Delta=h_{\mathrm{aug}}-h_{\mathrm{clean}}$ use label-preserving
sensor nuisance (channel dropout / time-warp style) on the same TCN; geometry
$\widehat\Sigma$ is estimated from those $\Delta$, then CCM chooses the
penalty for a downstream activity probe.
(Checkpoint provenance: Matching release~\citep{matching}, reproducibility
only.)

Practical $\tau$ correlates with subspace error
($\mathrm{corr}(\tau,\sin\Theta)=0.87$), but always-match \emph{hurts} on
$6/6$ cells---nuisance and task directions are entangled in the embedding.
CCM stays soft/even-spread, beats always-match on $6/6$, and beats B0 on
$3/6$ (Figure~\ref{fig:exp06}, bottom).
That is the discovery in operational form: match only geometry you can
identify; otherwise spread the penalty.
A trust gate that only helps when matching helps is incomplete; the HAR cell
shows the complementary failure avoided.

\paragraph{Fashion-MNIST rotation (identifiable positive control).}
Same Exp.~6 protocol on pixel deltas
$\Delta=\mathrm{rotate}_\theta(x)-x$ with $|\theta|\le15^\circ$.
As $n$ grows, CCM regimes progress even-spread$\to$soft$\to$match;
$\mathrm{corr}(\tau,\sin\Theta)=0.92$;
always-match and CCM beat B0 on $6/6$ $n$-cells
(Figure~\ref{fig:exp06}, middle).
This is the clean ``match when identifiable'' complement to HAR.

\paragraph{Office-31 Amazon$\to$DSLR (mid-confidence).}
Frozen ResNet-18 penultimate features ($d=512$); same-class cross-domain
deltas.
$\mathrm{corr}(\tau,\sin\Theta)=0.69$;
always-match beats B0 on $3/4$ cells;
mid-$n$ soft CCM is best among deployable arms ($0.689$ vs.\ always-match
$0.674$).
Ambient-$d$ $\tau$ is $5$--$10\times$ larger and would never leave even-spread:
the practical numerator is doing real work here.

\begin{figure}[t]
  \centering
  \includegraphics[width=\textwidth]{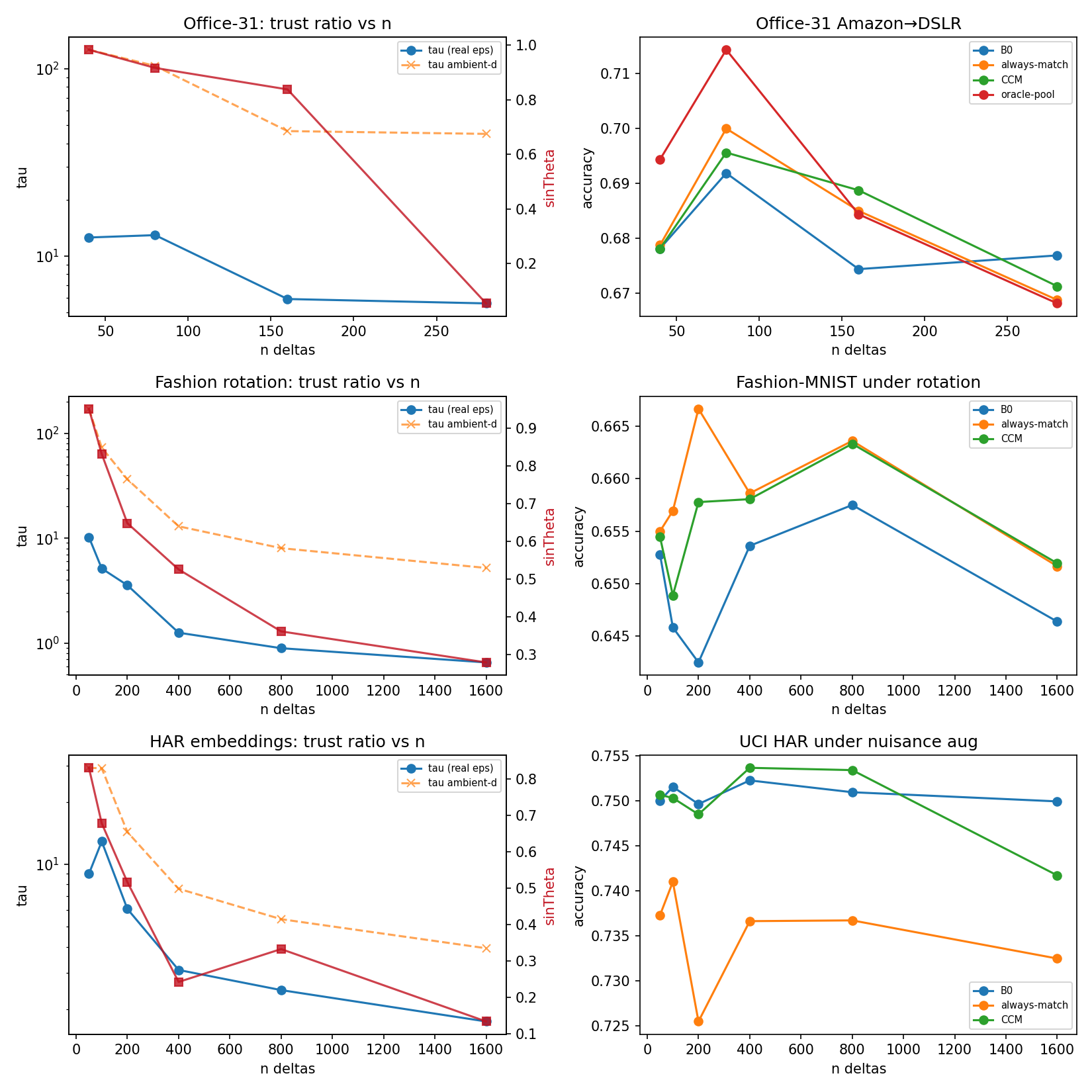}
  \caption{Exp.~6 flagship panel: practical vs.\ ambient trust ratio (left) and
  accuracy arms (right) for Office-31, Fashion rotation, and UCI HAR.
  HAR is the failure-avoidance cell---always-match hurts; CCM abstains.
  Fashion is the identifiable positive control.
  Not a domain-adaptation SOTA table.}
  \label{fig:exp06}
\end{figure}

\paragraph{Shared real-data protocol.}
\label{sec:exp-real}
\begin{itemize}
\item \textbf{Geometry pool.}
  Form label-preserving deltas $\Delta$; estimate $\widehat\Sigma$ from a pool
  of size $n$; choose rank $r$ by largest absolute eigengap among
  $\{4,8,16\}$.
\item \textbf{Practical $\varepsilon$.}
  Ambient-$d$ concentration keeps $\tau$ huge on $d\in\{96,512,784\}$ and
  forces permanent even-spread even when always-match helps.
  Exp.~6 uses
  \[
  \varepsilon
  =
  \min\bigl(
  \varepsilon_{\mathrm{eff}}(n,d_{\mathrm{eff}}=4r),\;
  \|\widehat\Sigma_1-\widehat\Sigma_2\|_{\mathrm{op}}
  \bigr),
  \]
  the split-sample term being an empirical reproducibility radius.
  Ambient-$\tau$ is reported alongside as a limitation, not used for decisions.
\item \textbf{CCM thresholds.}
  Freeze $(1,8)$ from the Fashion development split (\S\ref{sec:ccm}); held
  fixed for Office and HAR---not retuned per dataset.
\item \textbf{Downstream probe.}
  Standardised logistic regression after
  \emph{B0} = no projection;
  \emph{always-match} = project orthogonal to $\widehat W$;
  \emph{CCM} = match / soft / identity per~\eqref{eq:ccm}.
\end{itemize}

\paragraph{What Exp.~6 does not show.}
It does not certify ambient-$d$ concentration on real features; it does not
prove CCM thresholds transfer beyond the Fashion freeze; it does not claim
Matching beats every DA baseline on Office.
It shows that a trust-gated match/abstain rule avoids the always-match failure
mode on HAR and matches when identifiable on Fashion.

\subsection{Supporting synthetic programme (Exp.~1--5)}
\label{sec:exp-synthetic}

Default spike: $d=32$, rank $r=2$, $\lambda=5$, concentration constant $c=0.5$
(Exp.~1 \emph{tests} the proxy~\eqref{eq:eps-proxy}: coverage $1.0$; average
$\varepsilon/\mathrm{err}_{0.95}\approx1.5\times$).
CCM thresholds: frozen synthetic pair $(0.35,2.5)$ from \S\ref{sec:ccm}.

\paragraph{Dual cost.}
Raw mean LQ drift favours isotropic over-shrinkage.
Exp.~3 and~5 use
\[
J \;=\; \frac{D_W}{D_W^{\mathrm{B0}}} \;+\; \bigl(1-\mathrm{ret}_{u\perp W}\bigr),
\]
with residual nuisance drift $D_W$ on the true subspace $W$ and task retention
on $u\perp W$.
Naive baseline: full $\widehat\Sigma$.
CCM match: top-$r$ projector.
Even-spread: $(\mathrm{tr}\,\widehat\Sigma/d)\,I$.

\paragraph{Exp.~1--2 (trust ingredients).}
Exp.~1 confirms that the proxy $\varepsilon$ covers
$\|\widehat\Sigma-\Sigma\|_{\mathrm{op}}$.
Exp.~2 finds $\mathrm{corr}(\tau,\sin\Theta)\approx0.83$ and
$\mathrm{corr}(\tau,\mathrm{drift\text{-}gap})\approx0.91$
(Figure~\ref{fig:exp02}): $\tau$ tracks the quantities in
Theorem~\ref{thm:drift-gap}.
Packaging validation---secondary to the HAR failure-avoidance cell.

\begin{figure}[t]
  \centering
  \includegraphics[width=0.92\textwidth]{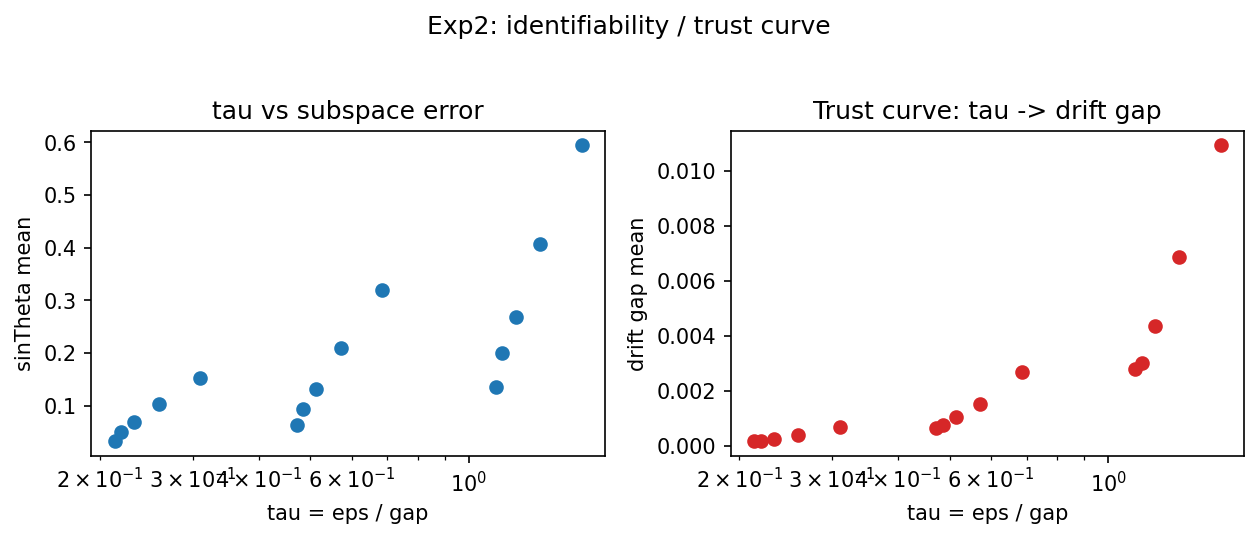}
  \caption{Exp.~2 trust curve on the synthetic spike grid: $\tau$ versus
  subspace error (left) and LQ drift gap to oracle projector match (right).
  Correlations: $0.83$ and $0.91$. Supporting evidence that $\tau$ organises
  the theorem quantities---not the flagship claim.}
  \label{fig:exp02}
\end{figure}

\paragraph{Exp.~3 (CCM decision on spikes).}
On the $3\times3$ gap$\times n$ grid (Table~\ref{tab:exp03},
Figure~\ref{fig:exp03}), regime frequencies are match $33\%$ / soft $59\%$ /
even-spread $8\%$.
CCM beats isotropic on dual cost in $9/9$ cells and beats both pure policies
in $6/9$.
At $n\ge400$, CCM dual cost drops to $0.01$--$0.11$ versus
$0.30$--$0.44$ for full $\widehat\Sigma$.
At $n=80$, CCM pays a caution premium versus noisy $\widehat\Sigma$---the
abstention tradeoff foreshadowing HAR.

\begin{table}[t]
\centering
\caption{Exp.~3 dual cost $J$ (lower better; $d=32$, $30$ reps).
Bold: best among naive / iso / CCM.
Regime $=$ \emph{modal} per-rep CCM decision; cell-mean CCM need not equal the
pure policy named by the mode
(e.g.\ $\gamma{=}0.08$, $n{=}80$: $19/30$ even-spread and $11/30$ soft, hence
CCM $0.653\neq$ iso $0.779$).}
\label{tab:exp03}
\begin{tabular}{rrlcccc}
\toprule
$\gamma$ & $n$ & regime & naive $\widehat\Sigma$ & iso $\mu I$ & CCM & LW \\
\midrule
$0.08$ & $80$   & even & $\mathbf{0.425}$ & $0.779$ & $0.653$ & $0.537$ \\
$0.08$ & $400$  & soft & $0.441$ & $0.774$ & $\mathbf{0.112}$ & $0.471$ \\
$0.08$ & $1600$ & match & $0.439$ & $0.767$ & $\mathbf{0.013}$ & $0.444$ \\
$0.25$ & $80$   & soft & $\mathbf{0.318}$ & $0.759$ & $0.374$ & $0.359$ \\
$0.25$ & $400$  & soft & $0.335$ & $0.755$ & $\mathbf{0.070}$ & $0.341$ \\
$0.25$ & $1600$ & match & $0.331$ & $0.768$ & $\mathbf{0.009}$ & $0.333$ \\
$0.70$ & $80$   & soft & $\mathbf{0.278}$ & $0.750$ & $0.387$ & $0.307$ \\
$0.70$ & $400$  & soft & $0.302$ & $0.749$ & $\mathbf{0.079}$ & $0.307$ \\
$0.70$ & $1600$ & match & $0.308$ & $0.745$ & $\mathbf{0.009}$ & $0.309$ \\
\bottomrule
\end{tabular}
\end{table}

\begin{figure}[t]
  \centering
  \includegraphics[width=0.92\textwidth]{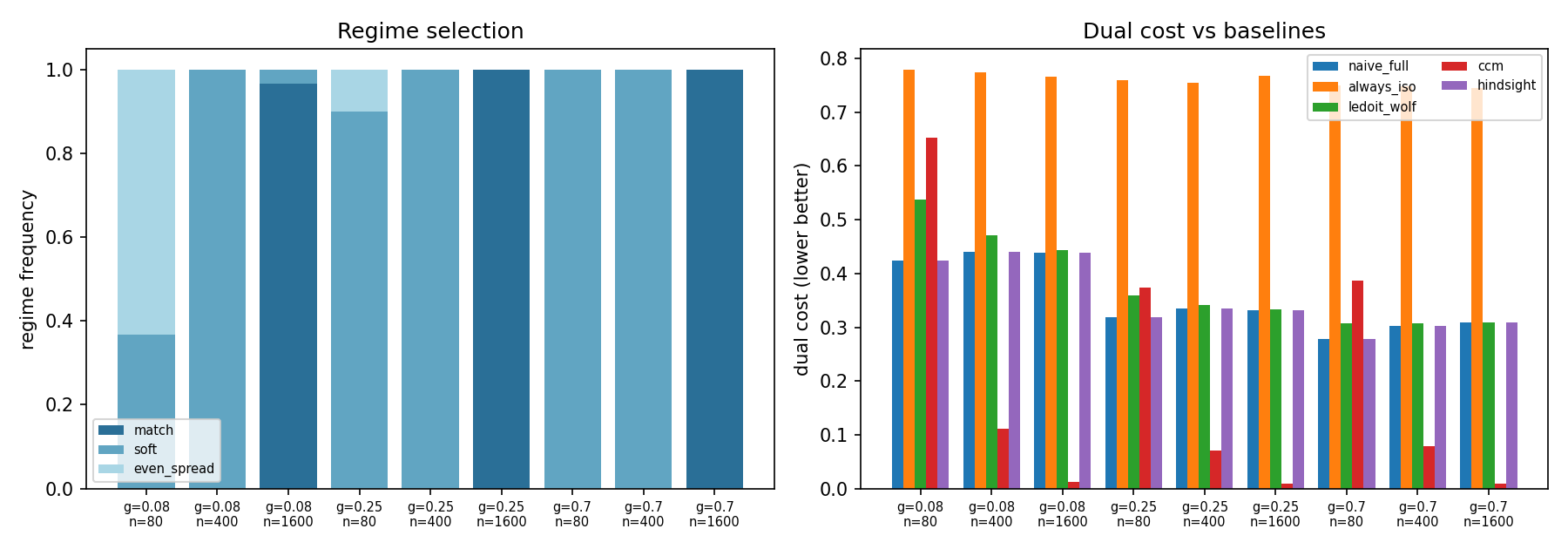}
  \caption{Exp.~3: regime frequencies (left) and dual cost versus fixed policies
  (right) on the gap$\times n$ grid.}
  \label{fig:exp03}
\end{figure}

\paragraph{Exp.~5 (stress).}
The same dual-cost story holds across $d\in\{16,32,64,128\}$ at fixed
$n/d\in\{2.5,12.5,50\}$ (Table~\ref{tab:exp05}):
CCM beats both pure policies in $9/12$ cells and stays within $5\%$ of
hindsight on $84\%$ of reps.

\begin{table}[t]
\centering
\caption{Exp.~5 dual cost at gap $\gamma=0.25$ (means over $20$ reps).
CCM wins whenever $n/d$ is not tiny.}
\label{tab:exp05}
\begin{tabular}{rrlccc}
\toprule
$d$ & $n/d$ & regime & naive & CCM & iso \\
\midrule
$16$  & $2.5$  & soft  & $\mathbf{0.374}$ & $0.562$ & $0.751$ \\
$16$  & $12.5$ & soft  & $0.347$ & $\mathbf{0.122}$ & $0.754$ \\
$16$  & $50$   & match & $0.350$ & $\mathbf{0.009}$ & $0.757$ \\
$32$  & $2.5$  & soft  & $\mathbf{0.312}$ & $0.466$ & $0.765$ \\
$32$  & $12.5$ & soft  & $0.328$ & $\mathbf{0.076}$ & $0.752$ \\
$32$  & $50$   & match & $0.335$ & $\mathbf{0.009}$ & $0.750$ \\
$64$  & $2.5$  & soft  & $\mathbf{0.284}$ & $0.304$ & $0.777$ \\
$64$  & $12.5$ & soft  & $0.295$ & $\mathbf{0.045}$ & $0.781$ \\
$64$  & $50$   & match & $0.296$ & $\mathbf{0.009}$ & $0.770$ \\
$128$ & $2.5$  & soft  & $0.256$ & $\mathbf{0.207}$ & $0.796$ \\
$128$ & $12.5$ & soft  & $0.264$ & $\mathbf{0.031}$ & $0.788$ \\
$128$ & $50$   & match & $0.267$ & $\mathbf{0.009}$ & $0.831$ \\
\bottomrule
\end{tabular}
\end{table}

\paragraph{Exp.~4 (scope; secondary).}
Same-$\Sigma$ / different-$P$ shifts are invisible to $\tau$; wrong-family
geometry breaks naive match while CCM's soft/iso fallback remains safer.
Statistical confidence is not model correctness---controlled misspecification,
not the main product.

\section{Failure boundary}
\label{sec:failure}

Theorem~\ref{thm:drift-gap} and CCM assume the displacements used for
$\widehat\Sigma$ share the deployment $\Sigma$ that matters at test time.
When that assumption fails, confidence is not correctness: a small $\tau$ does
not repair a misspecified geometry family.

\paragraph{Same $\Sigma$, different $P$.}
Exp.~4 (Case~A): $\tau$ can look fine while laws differ in higher-order
structure.
Second-moment confidence is not model correctness
(a natural next question outside LQ risk).

\paragraph{Wrong geometry family.}
Matching $\Sigma_A$ under deployment $\Sigma_B$ fails; CCM soft/even-spread is
the safer fallback (Exp.~4, Case~B)---abstention, not a repair of the wrong
estimand.

\paragraph{Weak gaps and tiny $n$.}
Then $\tau$ is large and CCM abstains by design.

\paragraph{Ambient-$d$ concentration on real $\Delta$.}
Proxy~\eqref{eq:eps-proxy} can over-abstain on high-dimensional features.
Exp.~6 states the practical numerator; sharper real-data bounds remain open.

\paragraph{Out of scope.}
Nonlinear Matching extensions; Dirichlet / heavy nonlinear Matching;
online / LLM Matching; new shrinkage estimators; ambient-$d$ certificates for
Office/Fashion/HAR.

\section{Conclusion}
\label{sec:conclusion}

Match only geometry you can identify; otherwise spread the penalty.
The trust ratio $\tau=\varepsilon/\gamma$ organises that rule: under the LQ
response, oracle-relative projector-matching drift scales as $\tau^2$ for
probes in the chosen top-$r$ deployment subspace
(Theorem~\ref{thm:drift-gap})---$O(\tau^2)$ in the separation region
$\tau<1/2$, with practical usefulness depending on constants.

CCM is a separate calibrated policy from $\tau$ to match / soft / abstain---not
a theorem corollary---with match inside that separation region and soft mostly
outside it.
The flagship evidence is failure avoidance: on UCI HAR embeddings, always-match
hurts on $6/6$ cells while CCM abstains; Fashion shows the identifiable
complement.

\paragraph{Open.}
Sharper real-data $\varepsilon$ with guarantees; adaptive rank with certificates;
trust gates beyond LQ.

\acks{No third-party funding. Optional citations (not required to read this
paper): arXiv:2604.21395 (PMH) and arXiv:2605.22800 (Matching Principle).}

\appendix
\section{Proof of the deployment-drift gap}
\label{sec:drift-gap}

We prove Lemma~\ref{lem:dk} and Theorem~\ref{thm:drift-gap}.
The LQ response is inherited from~\citet{matching}; matrix concentration and
Davis--Kahan are cited tools.
The intellectual object remains $\tau=\varepsilon/\gamma$
(Definition~\ref{def:tau}).

\subsection{Assumptions (restated)}

\begin{assumption}[Label-preserving second moment]
\label{ass:sigma}
Displacements $\Delta$ are label-preserving with finite second moment
$\Sigma:=\mathbb{E}[\Delta\Delta^\top]\succeq0$.
Theory assumes correct second-moment specification (misspecification is Exp.~4).
\end{assumption}

\begin{assumption}[Linear--quadratic response]
\label{ass:lq}
Fix a unit probe $v\in\mathbb{R}^d$ and penalty weight $\lambda>0$.
For $\Sigma'\succeq0$,
\[
w_\lambda(\Sigma';v):=(I+2\lambda\Sigma')^{-1}v,
\qquad
D_\Sigma(w):=w^\top\Sigma w.
\]
\end{assumption}

\begin{assumption}[Spiked deployment spectrum]
\label{ass:spike}
$\Sigma$ has eigenvalues $\lambda_1\ge\cdots\ge\lambda_d\ge0$ with gap
$\gamma:=\lambda_r-\lambda_{r+1}>0$.
Write $W$ for an orthonormal top-$r$ eigenbasis and $\Pi_W:=WW^\top$.
\end{assumption}

\begin{assumption}[Finite-sample geometry]
\label{ass:sample}
I.i.d.\ sub-Gaussian (or bounded) samples yield
$\widehat\Sigma=n^{-1}\sum_i\Delta_i\Delta_i^\top$ with
$\|\widehat\Sigma-\Sigma\|_{\mathrm{op}}\le\varepsilon(n,\delta)$ w.p.\ at least
$1-\delta$ under a standard matrix-concentration bound~\citep{vershynin}.
Experiment~1 calibrates a transparent proxy radius; we do not claim a new
concentration theorem.
\end{assumption}

\subsection{Proof of Lemma~\ref{lem:dk}}

\begin{proof}[Proof of Lemma~\ref{lem:dk}]
Under Assumptions~\ref{ass:spike}--\ref{ass:sample}, write
$E:=\widehat\Sigma-\Sigma$ and assume the concentration event
$\|E\|_{\mathrm{op}}\le\varepsilon(n,\delta)$.
If moreover $2\varepsilon(n,\delta)<\gamma$, the top-$r$ eigenspace remains
separated under the perturbation.
The Davis--Kahan / Wedin form of~\citet{yu2015} then yields
\[
\|\Pi_{\widehat W}-\Pi_W\|_{\mathrm{op}}
\le C_{\mathrm{DK}}\,\frac{\varepsilon(n,\delta)}{\gamma}
= C_{\mathrm{DK}}\,\tau
\]
with probability at least $1-\delta$, for an absolute constant
$C_{\mathrm{DK}}$ (cf.~\citet{daviskahan}).
When $2\varepsilon\ge\gamma$ one has $\tau\ge1/2$ and the bound is
uninformative; CCM treats that regime as non-identifiable and abstains
(\S\ref{sec:ccm}).
\end{proof}

\subsection{Projector response identity}

For the Matching consequence we only need the closed-form projector response
$(I+2\lambda\Pi)^{-1}=I-\alpha\Pi$ with
$\alpha:=2\lambda/(1+2\lambda)$.
(A general resolvent Lipschitz bound exists for non-projector penalties and
recovers a looser factor $4\lambda$; we isolate projectors precisely to remove
that $\lambda$-growth. The resolvent is a calculation, not the trust idea.)

\begin{lemma}[Projector response gap]
\label{lem:projector-gap}
If $A,B$ are orthogonal projectors and
$\alpha:=2\lambda/(1+2\lambda)$, then
\[
(I+2\lambda A)^{-1}-(I+2\lambda B)^{-1}
=\alpha\,(B-A),
\]
so
$\bigl\|(I+2\lambda A)^{-1}-(I+2\lambda B)^{-1}\bigr\|_{\mathrm{op}}
=\alpha\|A-B\|_{\mathrm{op}}$.
\end{lemma}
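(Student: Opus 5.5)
The plan is to establish the identity $(I+2\lambda P)^{-1}=I-\alpha P$ for an arbitrary orthogonal projector $P$, with $\alpha=2\lambda/(1+2\lambda)$. Subtracting the instance $P=A$ from the instance $P=B$ then gives the difference $\alpha(B-A)$ directly. Taking operator norms completes the argument, since $\alpha>0$ gives $\|\alpha(B-A)\|_{\mathrm{op}}=\alpha\|A-B\|_{\mathrm{op}}$.

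For the identity, I would verify that $I-\alpha P$ is a right inverse of $I+2\lambda P$ (it is then a two-sided inverse in finite dimensions). Expanding and using $P^2=P$,
\[
(I+2\lambda P)(I-\alpha P)=I+(2\lambda-\alpha-2\lambda\alpha)P .
\]
The coefficient of $P$ is $2\lambda-\alpha(1+2\lambda)=2\lambda-2\lambda=0$, so the product is $I$. Invertibility of $I+2\lambda P$ is in any case clear: $P$ is symmetric with spectrum in $\{0,1\}$, so $I+2\lambda P$ has eigenvalues in $\{1,1+2\lambda\}$, all strictly positive when $\lambda>0$.

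As a cross-check, the spectral route gives the same result. On $\mathrm{range}(P)$ the inverse acts as $1/(1+2\lambda)=1-\alpha$, and on $\ker P$ it acts as $1$. This is exactly how $I-\alpha P$ acts on those two subspaces.

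There is no real obstacle. The only point needing care is that idempotence $P^2=P$ is used essentially: this is what makes the resolvent affine in the penalty, and it is why the lemma is restricted to projectors rather than general $\Sigma'$. Self-adjointness of $A$ and $B$ is not needed for the algebraic identity itself; it only ensures that the eigenvalue description above is valid.
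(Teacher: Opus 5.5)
Your proof is correct and follows the paper's route: derive the closed form $(I+2\lambda P)^{-1}=I-\alpha P$ for a projector, subtract the instances $P=A$ and $P=B$, and use $\alpha>0$ for the norm equality. The only cosmetic difference is how the closed form is obtained: you verify it by direct multiplication using $P^2=P$, whereas the paper reads it off from the spectrum $\{1,1+2\lambda\}$ of $I+2\lambda P$, which is your cross-check.
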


\begin{proof}
Eigenvalues of $I+2\lambda\Pi$ lie in $\{1,1+2\lambda\}$, hence
$(I+2\lambda\Pi)^{-1}=I-\alpha\Pi$.
Subtracting the two closed forms gives the claim.
\end{proof}

\subsection{Proof of Theorem~\ref{thm:drift-gap}}

\begin{lemma}[Compressed projector gap]
\label{lem:sin2}
If $P,Q$ are orthogonal projectors and $s:=\|P-Q\|_{\mathrm{op}}<1$, then
\[
\|P(I-Q)P\|_{\mathrm{op}}
\;\le\;
s^2.
\]
\end{lemma}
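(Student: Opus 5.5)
The plan is to write the compressed operator as a square and then bound that square by a factorization that isolates the difference $P-Q$. Since $I-Q$ is an orthogonal projector, it is idempotent and self-adjoint, so
\[
P(I-Q)P = P(I-Q)(I-Q)P = \bigl((I-Q)P\bigr)^{\!*}\bigl((I-Q)P\bigr).
\]
This gives $\|P(I-Q)P\|_{\mathrm{op}}=\|(I-Q)P\|_{\mathrm{op}}^2$, and the whole task reduces to showing $\|(I-Q)P\|_{\mathrm{op}}\le s$.

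For the reduced claim, I would use the identity
\[
(I-Q)P = (I-Q)(P-Q) + (I-Q)Q = (I-Q)(P-Q),
\]
which holds because $(I-Q)Q=0$. The right-hand side is a contraction $I-Q$ (operator norm at most $1$) applied to $P-Q$. Hence $\|(I-Q)P\|_{\mathrm{op}}\le\|P-Q\|_{\mathrm{op}}=s$. Squaring this bound gives $\|P(I-Q)P\|_{\mathrm{op}}\le s^2$, which is the statement.

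An alternative route, useful as a cross-check, uses the operator $P(I-Q)P = P - PQP$ directly. It is positive semidefinite. For a unit vector $x\in\mathrm{range}(P)$ one has $x^\top P(I-Q)Px=\|(I-Q)x\|^2$. Writing $(I-Q)x=(I-Q)(P-Q)x$ again gives a value at most $s^2$. On $\mathrm{range}(P)^\perp$ the operator vanishes, so the supremum of the quadratic form, which equals the norm for a PSD operator, is at most $s^2$.

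I do not expect a genuine obstacle. The only point needing care is the observation that $(I-Q)P$ annihilates the $Q$ component, so that only $P-Q$ survives. The hypothesis $s<1$ is not needed for the inequality itself. It only ensures that the two subspaces have equal dimension and are in the regime where the later drift-gap expansion is meaningful. In the proof of Theorem~\ref{thm:drift-gap}, I would then apply this lemma with $P=\Pi_W$ and $Q=\Pi_{\widehat W}$, together with $s\le C_{\mathrm{DK}}\tau$ from Lemma~\ref{lem:dk}.
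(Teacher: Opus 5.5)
Your proof is correct, and it takes a different route from the paper. The paper uses the principal-angle (CS-decomposition) description of a pair of projectors. The nonzero singular values of $P-Q$ are the $\sin\theta_i$, and those of $P(I-Q)P$ are the $\sin^2\theta_i$, so $\|P(I-Q)P\|_{\mathrm{op}}=\max_i\sin^2\theta_i\le s^2$. This is in fact an equality when $s<1$, since the ranks then agree and $s=\max_i\sin\theta_i$.

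You instead use two algebraic identities, $P(I-Q)P=((I-Q)P)^{*}(I-Q)P$ and $(I-Q)P=(I-Q)(P-Q)$. Together they give $\|P(I-Q)P\|_{\mathrm{op}}=\|(I-Q)P\|_{\mathrm{op}}^2\le s^2$ directly.

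Your argument is self-contained and needs neither the hypothesis $s<1$ nor equal ranks. It also proves, as a by-product, the bound $\|(I-Q)Pv\|_2\le s\|v\|_2$, which the paper's proof of Theorem~\ref{thm:drift-gap} uses without separate justification. The paper's route gives more information in exchange: it identifies the compressed gap exactly in terms of principal angles. That extra precision rests on a cited structural fact and is not needed for the bound the theorem uses.
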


\begin{proof}
In principal angles, the nonzero singular values of $P-Q$ are $\sin\theta_i$ and
those of $P(I-Q)P$ are $\sin^2\theta_i$~\citep{daviskahan}, so
$\|P(I-Q)P\|_{\mathrm{op}}=\max_i\sin^2\theta_i\le s^2$.
\end{proof}

\begin{proof}[Proof of Theorem~\ref{thm:drift-gap}]
On the $1-\delta$ event of Assumption~\ref{ass:sample}, Lemma~\ref{lem:dk} gives
$s:=\|\Pi_{\widehat W}-\Pi_W\|_{\mathrm{op}}\le C_{\mathrm{DK}}\tau$.
Write $P:=\Pi_W$, $Q:=\Pi_{\widehat W}$, and fix a unit $v\in\mathrm{range}(P)$.
Then $w_\star=(I-\alpha P)v=(1-\alpha)v$ and
$D_\Sigma(w_\star)=(1-\alpha)^2\,v^\top\Sigma v$.
Also $w_n=(I-\alpha Q)v$, so
\begin{align*}
D_\Sigma(w_n)
&=
v^\top\Sigma v-2\alpha\,v^\top Q\Sigma v+\alpha^2\,v^\top Q\Sigma Qv,\\
D_\Sigma(w_n)-D_\Sigma(w_\star)
&=
2\alpha\,v^\top(I-Q)\Sigma v
-\alpha^2\,v^\top(\Sigma-Q\Sigma Q)v.
\end{align*}
Because $v\in\mathrm{range}(P)$ and $\Sigma P=P\Sigma$, both $\Sigma v$ and $v$ lie in
$\mathrm{range}(P)$.
For $a,b\in\mathrm{range}(P)$,
$a^\top(I-Q)b=a^\top P(I-Q)Pb$, so Lemma~\ref{lem:sin2} yields
\[
\bigl|v^\top(I-Q)\Sigma v\bigr|
\;\le\;
s^2\|\Sigma\|_{\mathrm{op}}\|v\|_2^2.
\]
Moreover
$\Sigma-Q\Sigma Q=(I-Q)\Sigma+Q\Sigma(I-Q)$, hence
\begin{align*}
\bigl|v^\top(\Sigma-Q\Sigma Q)v\bigr|
&\le
\bigl|v^\top(I-Q)\Sigma v\bigr|
+
\bigl|(Qv)^\top\Sigma(I-Q)v\bigr| \\
&\le
s^2\|\Sigma\|_{\mathrm{op}}\|v\|_2^2
+
\|\Sigma\|_{\mathrm{op}}\|(Q-P)v\|_2\|(I-Q)v\|_2
\\
&\le
2s^2\|\Sigma\|_{\mathrm{op}}\|v\|_2^2,
\end{align*}
where $\|(I-Q)v\|_2=\|(I-Q)Pv\|_2\le s\|v\|_2$ and
$Qv=v+(Q-P)v$.
The triangle inequality then yields the absolute gap
\begin{align*}
\bigl|D_\Sigma(w_n)-D_\Sigma(w_\star)\bigr|
&\le
2\alpha\,\bigl|v^\top(I-Q)\Sigma v\bigr|
+\alpha^2\,\bigl|v^\top(\Sigma-Q\Sigma Q)v\bigr| \\
&\le
\bigl(2\alpha+2\alpha^2\bigr)\|\Sigma\|_{\mathrm{op}}\,s^2
\;\le\;
\bigl(2\alpha+2\alpha^2\bigr)C_{\mathrm{DK}}^2\|\Sigma\|_{\mathrm{op}}\,\tau^2,
\end{align*}
which is~\eqref{eq:drift-gap}.
\end{proof}

\begin{remark}[Ambient probes: $O(\tau)$ only]
\label{rem:ambient-probe}
If $v\notin\mathrm{range}(\Pi_W)$, the bulk term
$v^\top Q(I-\Pi_W)\Sigma(I-\Pi_W)v$ is only $O(s)$ in general.
The crude estimate
$|D_\Sigma(w_n)-D_\Sigma(w_\star)|
\le 2\alpha C_{\mathrm{DK}}\|\Sigma\|_{\mathrm{op}}\tau\|v\|_2^2$
still holds for every unit $v$ (expand
$|(\Delta w)^\top\Sigma(w_n+w_\star)|$ with $\|\Delta w\|_2\le\alpha s\|v\|_2$).
Theorem~\ref{thm:drift-gap} concerns the chosen top-$r$ deployment subspace
$\mathrm{range}(\Pi_W)$---CCM's projector match arm.
Equality $\mathrm{range}(\Sigma)=\mathrm{range}(\Pi_W)$ holds only under exact
rank-$r$ ($\lambda_{r+1}=0$); if $\lambda_{r+1}>0$ the Matching cover game is
larger, and probes outside $\mathrm{range}(\Pi_W)$ retain only the $O(\tau)$
ambient estimate.
\end{remark}

\begin{remark}[Numeric liveness]
\label{rem:bound-live}
Trivially $|D_\Sigma(w_n)-D_\Sigma(w_\star)|\le\|\Sigma\|_{\mathrm{op}}\|v\|_2^2$.
The $\tau^2$ bound beats that when
$(2\alpha+2\alpha^2)C_{\mathrm{DK}}^2\tau^2<1$.
Theorem~\ref{thm:drift-gap} therefore supplies $O(\tau^2)$ scaling in the
Davis--Kahan separation region $\tau<1/2$, but whether that bound is
numerically useful depends on the prefactor constants
($\alpha$, $C_{\mathrm{DK}}$, $\|\Sigma\|_{\mathrm{op}}$)---not on writing
$C_{\mathrm{DK}}\lesssim 1$ and declaring every $\tau<1/2$ ``live.''
CCM's frozen match threshold $\tau_{\mathrm{low}}=0.35$ is a calibrated policy
choice inside the separation region; it is not a corollary of numeric
liveness of the analytic prefactor.
The soft band $(\tau_{\mathrm{low}},\tau_{\mathrm{high}}]$ remains a calibrated
heuristic (\S\ref{sec:ccm}).
Empirically, Exp.~2 drift gaps are far smaller than the analytic prefactor
(scaling sketch, not a calibrated magnitude model).
\end{remark}

\begin{remark}[Full $\widehat\Sigma$ vs.\ projector]
\label{rem:full-vs-proj}
The theorem isolates subspace error through projector penalties.
Matching full $\widehat\Sigma$ adds bulk eigenvalues that can shrink task
directions even when the top subspace is accurate; experiments therefore treat
full $\widehat\Sigma$ as the dangerous naive baseline and the projector as
CCM's match arm.
\end{remark}

\end{document}